\documentclass[letterpaper]{article} 
\usepackage{aaai2026}
\usepackage{times}  
\usepackage{helvet}  
\usepackage{courier}  
\usepackage[hyphens]{url}  
\usepackage{graphicx} 
\usepackage{natbib}  
\usepackage{caption} 
\usepackage{amsmath}
\usepackage{amssymb}
\usepackage{booktabs}
\usepackage{amsthm}
\usepackage{subcaption}
\usepackage{paralist}
\usepackage{enumitem}
\usepackage{todonotes}
\usepackage{easyReview}
\presetkeys{todonotes}{inline}{}
\usepackage{multirow}
\usepackage{standalone}
\standaloneconfig{mode=buildnew}
\usepackage{pgfplots}
\pgfplotsset{compat=1.18}
\usepackage{tikz}

\usepackage{algorithm}
\usepackage{algorithmicx}
\usepackage[noend]{algpseudocode}

\usepackage{xstring}
\usepackage{xspace}
\usepackage{tablefootnote}

\newtheorem{theorem}{Theorem}
\newtheorem{defn}{Definition}

\newcommand{\define}[1]{\emph{#1}}
\newcommand{\bracketcite}[1]{\citeauthor{#1} (\citeyear{#1})}
\newcommand{\tuple}[1]{\ensuremath{\langle #1 \rangle}}
\newcommand{\facttuple}[2]{\ensuremath{\langle #1, #2 \rangle}}
\newcommand{\factv}[2]{\ensuremath{#1 \mapsto #2}}
\newcommand{\fact}{\ensuremath{p}}
\newcommand{\pre}{\ensuremath{\mathit{pre}}}
\newcommand{\adds}{\ensuremath{\mathit{add}}}
\newcommand{\del}{\ensuremath{\mathit{del}}}

\newcommand{\cost}{\ensuremath{\mathit{c}}}
\newcommand{\utility}{\ensuremath{u}}
\newcommand{\costbound}{\ensuremath{b}}
\newcommand{\eff}{\ensuremath{\mathit{eff}}}
\newcommand{\pos}{\ensuremath{\mathit{pos}}}
\newcommand{\vars}{\ensuremath{\mathit{vars}}}
\newcommand{\skipa}{\ensuremath{\mathit{skip}}}
\newcommand{\sasplus}{SAS$^+$}
\newcommand{\cbpr}{\ensuremath{\Delta}}
\newcommand{\PiSkip}{\Pi^{\textit{s1}}_{\textit{a1}}}
\newcommand{\PiMR}{\Pi^{\textit{r}}}

\newcommand{\events}{\ensuremath{E}}
\newcommand{\event}{\ensuremath{e}}
\newcommand{\preset}{\ensuremath{P}}
\newcommand{\lastevent}{\ensuremath{\sigma}}

\newcommand{\A}{\mathcal{A}}
\newcommand{\I}{\mathcal{I}}
\newcommand{\G}{\mathcal{G}}
\newcommand{\V}{\mathcal{V}}
\newcommand{\D}{\mathcal{D}}

\newcommand{\OSP}{\ensuremath{\text{OSP}}\xspace}
\newcommand{\ILP}{\ensuremath{\text{ILP}}\xspace}
\newcommand{\ILPmt}{\ensuremath{\text{ILP}_{\text{mt}}}\xspace}
\newcommand{\ILPst}{\ensuremath{\text{ILP}_{\text{st}}}\xspace}
\newcommand{\ILPplus}{\ensuremath{\text{ILP}^+}\xspace}
\newcommand{\ILPplusst}{\ensuremath{\text{ILP}^+_{\text{st}}}\xspace}
\newcommand{\ILPplusstall}{\ensuremath{\text{ILP}^+_{\text{st,all}}}\xspace}

\newcommand{\floortile}{\textit{Floortile}}
\newcommand{\tidybot}{\textit{Tidybot}}
\newcommand{\openstacks}{\textit{OpenStacks}}
\newcommand{\transport}{\textit{Transport}}
\newcommand{\visitall}{\textit{Visitall}}
\newcommand{\elevators}{\textit{Elevators}}
\newcommand{\parcprinter}{\textit{ParcPrinter}}
\newcommand{\woodworking}{\textit{Woodworking}}

\newcommand{\scanalyzer}{\textit{Scanalyzer}}

\copyrighttext{This article is an extended version of a paper accepted at the International Conference on Automated Planning and Scheduling (ICAPS 2026). Copyright \copyright\ 2026, Association for the Advancement of Artificial Intelligence (www.aaai.org). All rights reserved. 
}

\title{Finding Optimal Cost-Bounded Plan Reductions: Refined Model}
\author{
    Martha Del Toro,
    Raquel Fuentetaja,
    Angel García-Olaya
}
\affiliations{
    Computer Science and Engineering Department, Universidad Carlos III de Madrid, Spain\\
    mdeltoro@pa.uc3m.es, \{rfuentet, agolaya\}@inf.uc3m.es
}

\begin{document}

\maketitle

\begin{abstract}
  In some real applications a plan may later become unfeasible due to newly imposed budget constraints, yet, at the same time, using only the original actions of the plan and their order is mandatory.
  In this paper, we study the problem of extracting, from a precomputed plan, a valid subplan that maximizes utility while respecting a cost bound.
  Each goal is given a utility value and the plan is reduced by removing actions that support low-utility goals, while preserving both executability and the original action order.
  We show the decision variant is NP-complete and propose two exact methods to solve it: one via oversubscription planning (OSP) and another via Integer Linear Programming (ILP).
  This paper extends our previous work published at ICAPS 2026 \cite{deltoro-et-al-icaps2026}. While the core framework remains as introduced there, we further introduce a refined ILP formulation that significantly decreases the model size and improves computational efficiency.
\end{abstract}

\section{Introduction}
Automated planning systems generate plans that are executed under resource constraints that can change dynamically. When a cost limit is imposed, the original plan may become infeasible, and some goals may need to be dropped to satisfy the new cost constraints.
Keeping a subplan rather than replanning is essential in applications where the original plan has already been externally validated or relies on pre-committed resources. In such cases, the overhead of approving a new plan can be high, and full reorganization is often too costly or too slow.
For instance, in space operations, command sequences must meet strict safety standards, and operators may prefer to skip specific tasks rather than upload an unverified sequence that could risk the mission.
Similarly, in industrial workflows, operators follow established procedures and fixed schedules, making task omission simpler than reorganizing the entire workflow.
With that in mind, we study the problem of extracting a valid subplan from a precomputed plan that maximizes utility while respecting a cost bound.
This problem is closely related to \emph{oversubscription planning} (OSP)~\cite{smith-icaps2004,domshlak-mirkis-jair2015, katz-et-al-icaps2019, garciaolaya-et-al-aij2021, speck-katz-aaai2021, katz-keyder-aaai2022}, with the difference that we operate on a precomputed plan and allow only action deletion, preserving the plan structure.
It also differs from traditional plan-repair techniques~\cite{fox-et-al-icaps2006}, which typically aim to restore all original goals after unexpected changes in the environment; in our case, some goals are deliberately dropped to satisfy a new budget.

We show that even the decision version of the problem is NP-complete.
To solve it, we build on techniques for identifying and removing redundant actions from plans~\cite{fink-yang-cscsi1992,nakhost-muller-icaps2010,chrpa-et-al-ictai2012,chrpa-et-al-icaps2012,balyo-et-al-socs2014, salerno-et-al-jair2025}, specifically on the one introduced by \bracketcite{salerno-et-al-jair2025}, and propose two new exact methods: one based on compiling the problem into an OSP instance, and another based on formulating it as an Integer Linear Program (ILP), using two different models.

Our empirical evaluation shows that the ILP approach outperforms OSP for solving this problem. Furthermore, the refined model introduced in this paper significantly improves upon our previous formulation.
The remainder of this paper is organized as follows. Sections \ref{sec:background} and \ref{sec:problem-definition-and-models} (covering the background, formal problem definition, and base formulation) are basically the ones in the original ICAPS paper \cite{deltoro-et-al-icaps2026}. Section \ref{sec:refined-model} is entirely new and introduces our refined ILP formulation. Section \ref{sec:experiments} presents an expanded experimental evaluation, including additional experiments that compare the two ILP formulations. Finally, Section \ref{sec:conclusion} presents the conclusions and future work.

\section{Background}
\label{sec:background}

We consider classical planning tasks with action costs in the \sasplus formalism \cite{backstrom-nebel-compint1995}.
\begin{defn}[\textbf{Planning task}]
  \label{defn:planning-task}
  A planning task is a tuple $\Pi=\langle \V,\A, \I, \G,  \cost \rangle$, where:
  \begin {itemize}

  \item $\V$ is a set of  \define{state variables}, each with a finite \define{domain} $\D(v)$.
  A pair $\facttuple{v}{d}$ such that $v \in V$ and $d \in \D(v)$ is a \define{fact}, denoted as $\factv{v}{d}$. $F$ is the set of all facts.
  A \define{partial state} $s$ maps a subset of variables $\vars(s) \subseteq \V$ to values in their domains.
  We often interpret partial states as sets of facts.
  The value of variable $v \in \vars(s)$ in partial state $s$ is denoted as $s[v] \in \D(v)$.
  A partial state that assigns values to all variables ($\vars(s) = \V$) is a \define{state}.

  \item  $\A$ is a finite set of {\it actions}.
  Each \define{action} $a \in \A$ is a pair $\tuple{pre(a), \eff(a)}$, where $\pre(a)$ and $\eff(a)$ are both partial states defining the \define{precondition} and the \define{effect} of $a$, respectively.
  An action $a \in \A$ is applicable in state $s$ iff $\pre(a) \subseteq s$.
  Applying action $a$ in $s$ yields the successor state $s[\![a]\!]$, with $s[\![a]\!][v] = \eff(a)[v]$ if $v \in \vars(\eff(a))$ and as $s[\![a]\!][v]= s[v]$ otherwise.

  \item  $\I$ is the \define{initial state}.

  \item  $\G$ is a partial state describing the \define{goal condition}.

  \item  $\cost$ is a non-negative \define{cost function} defining the cost of each  action $\cost: \A \to \mathbb{N}_0$.

  \end{itemize}
\end{defn}

A \define{solution} or \define{plan} for $\Pi$ is an action sequence $\pi = \tuple{a_1, \dots, a_n}$ that, when applied in succession starting from the initial state, induces a state sequence $\mathcal{S_\pi}=\langle s_0,\dots,s_n \rangle$ such that $s_0 = \I$, $\G \subseteq s_n$, and for each $i$ with $1 \leq i \leq n$, $a_i$ is applicable in $s_{i-1}$, and $s_i = s_{i-1}[\![a_i]\!]$.
We denote the \define{length} of plan $\pi = \tuple{a_1, \dots, a_n}$ as $|\pi| = n$.
The \define{cost} of a plan $\pi$ is $\cost(\pi) = \sum_{i=1}^n \cost(a_i)$.
A plan is \define{optimal} if there is no cheaper plan.
We slightly abuse notation and let $a_i \in \pi$ denote that action $a_i$ occurs at position $i$ of $\pi$.
A \define{plan reduction} of a plan $\pi$ for a planning task $\Pi$ is a subsequence of $\pi$ that is also a plan for $\Pi$~\cite{salerno-et-al-kr2023}.
Techniques for filtering redundant actions from a plan aim to find \define{perfectly justified} plan reductions for it, i.e., subplans without redundant actions.
A plan $\pi$ for $\Pi$ is \define{perfectly justified} if it admits no plan reduction $\rho$ of \(\pi\) such that $|\rho| < |\pi|$.
A \define{Minimal Reduction (MR)} of a plan is a perfectly justified plan reduction of it, with no other perfectly justified plan reduction having a lower cost~\cite{nakhost-muller-icaps2010,balyo-et-al-socs2014}.
\bracketcite{salerno-et-al-jair2025} encode the problem of finding an MR of a given plan as a classical planning task.
Their best-performing variant is the \emph{skip one, apply one} compilation, where each action in the original plan is explicitly represented and, at each position, the corresponding action can either be applied or skipped.
Since this technique is relevant to our setting, we summarize it next.
They define $F_\pi= R_\pi \cup W_\pi$, where $R_\pi$ is the set of facts that appear in a precondition of the actions in the plan $\pi$ or in the goal, and $W_\pi$ is the set of facts that appear in the effects of the actions or the initial state.
Then, they define the function $\tau$ that, given a fact $\factv{v}{d}$, maps the fact to itself if it is in  $R_\pi$ and to a new fact named the \emph{irrelevant fact} ($\factv{v}{\theta}$) otherwise, where $\theta$ denotes that $v$ takes a value that is not used afterward (it is not a precondition for any subsequent action and is not in the goal).
The encoding for solving the minimal reduction problems is a planning task $\PiMR=\tuple{\V',\A', \I', \G', \cost'}$, with facts $F'_\pi = \{\tau(\factv{v}{d}) \mid \factv{v}{d} \in F_\pi \}$, referred to as the \emph{relevant facts} for $\pi$; where:\footnote{$\PiMR$ is denoted as $\PiSkip$ in \bracketcite{salerno-et-al-jair2025}.}
\begin{itemize}
  \item $\V'$ retains only those variables in $\V$ with multiple values in $F'_\pi$ and contains a new variable $\pos$ to track action positions in the original plan, $\D(\pos)=\{0,\dots,n\}$.

  \item $\A'$ contains, for each plan action, an \emph{apply} action $a_i$ and a \emph{skip} action $\skipa_i$.
        Apply actions have the same preconditions as the original action plus an additional precondition enforcing that the current position is the preceding one.
        Their effects are those of the original action, mapped by $\tau$ to keep only relevant values, plus an additional effect that updates $\pos$.
        Skip actions simply increment $\pos$.

  \item $\I'$ contains the facts relevant for the plan $(\I \cap R_\pi)$, plus facts setting variables in $\V'$ with an irrelevant initial value to $\theta$, and one fact setting $\pos$ to $0$.

  \item $\G'$ contains the original goals and requires the {\it pos} variable to be at the end of the original plan.

  \item $\cost'$ assigns the cost of the corresponding original action to the \emph{apply} actions and a cost of zero to the \emph{skip} actions.\footnote{The authors apply a cost scaling to deal with zero-cost actions that we do not consider in this paper.}

\end{itemize}

\begin{defn}[\textbf{OSP task}]
  \label{defn:osp-task}
  An \define{oversubscription planning task} is a tuple $\Pi_{osp} = \tuple{\V, \A, \I, \cost, \utility, \costbound}$, where $\V$, $\A$, $\I$ and $\cost$ are as in Definition \ref{defn:planning-task}, with $F$ as the set of all facts; $\utility: F \to \mathbb{N}_0$ is a fact utility function and $\costbound \in \mathbb{N}_0$ is a cost bound.
\end{defn}

A plan $\pi$ for an OSP task $\Pi_{osp}$ is a sequence of operators applicable in $\I$ that yields states $s_1, \dots, s_n$, such that its cumulative cost is less than or equal to the bound, $\cost(\pi) \leq \costbound$.
The utility of a plan $\utility(\pi)$ is the utility of the end state induced by $\pi$, $\utility(\pi) = \utility(s_n)$, and the utility of a state
$s$ is the sum of the utilities of its facts, $\utility(s) = \underset{\fact \in s } \sum \utility(\fact)$.
A plan $\pi$ is optimal if there is no other plan $\pi'$ such that $\utility(\pi') > \utility(\pi)$.
Finally, a \define{soft goal} is any fact $\fact \in F$ with $\utility(\fact) > 0$. 

\section{Cost-Bounded Plan Reductions}
\label{sec:problem-definition-and-models}
The task to solve is to extract a subsequence of a given plan that is executable and respects the cost limit:

\begin{defn}[\textbf{Cost-Bounded Plan Reduction}]
  \label{defn:cost-bounded-plan-reduction}
  Let $\pi$ be a plan for a classical planning task $\Pi = \tuple{\V, \A, \I, \G, \cost}$ with facts $F$, a cost bound $\costbound \in \mathbb{N}_0$, and a fact utility function $\utility: F \to \mathbb{N}_0$, such that $\utility(\fact) = 0$ for each fact $\fact \in F$ if $\fact \notin \G$. A \define{Cost-Bounded Plan Reduction} of $\pi$ is a subsequence of $\pi$ that is a plan for the OSP task $\Pi_{osp} = \tuple{\V, \A, \I, \cost, \utility, \costbound}$.
\end{defn}
We denote by $\cbpr=\tuple{\Pi, \pi, \costbound, \utility}$ the problem of finding a cost-bounded plan reduction of a given plan $\pi$ for $\Pi$ subject to cost bound $\costbound$ and fact utility function $\utility$.
A solution for $\cbpr$ is optimal iff it is a plan for $\Pi_{osp}$ whose utility is maximal among all plans for $\Pi_{osp}$  that are subsequences of $\pi$.\footnote{Note that if $\costbound \geq \cost(\pi)$, then $\pi$ itself is a cost-bounded plan reduction of $\pi$ with maximum utility.}

\begin{theorem}
  \label{th:np-completeness}
  Given a cost-bounded plan reduction problem $\cbpr=\tuple{\Pi, \pi, \costbound, \utility}$,  determining whether there exists a solution with a utility of at least $M$ is NP-complete.
\end{theorem}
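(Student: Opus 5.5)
We prove the two halves of the statement in turn: membership in NP by a certificate argument, and NP-hardness by a reduction from the 0/1 Knapsack decision problem.

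\textbf{Membership in NP.} The certificate is a subsequence $\rho$ of $\pi$, encoded as a bit vector of length $|\pi|$ marking which positions are kept. A verifier performs three checks, each in time polynomial in $|\pi|$ and in the size of $\Pi$:
\begin{itemize}
  \item it simulates $\rho$ from $\I$ and confirms that every precondition holds when its action is applied;
  \item it sums the action costs and confirms that $\cost(\rho) \leq \costbound$;
  \item it sums the utilities of the facts in the final state and confirms that this total is at least $M$.
\end{itemize}
Hence the decision problem is in NP.

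\textbf{NP-hardness.} Take a Knapsack instance with items $1,\dots,k$, weights $w_j$, values $v_j$, capacity $W$ and target value $V$. From it we build $\cbpr$ as follows.
\begin{itemize}
  \item For each item $j$ there is a binary variable $g_j$ with $\I[g_j]=0$.
  \item For each item $j$ there is an action $a_j$ with empty precondition, effect $\factv{g_j}{1}$, and cost $\cost(a_j)=w_j$.
  \item The goal is $\G=\{\factv{g_j}{1} \mid 1\le j\le k\}$, with utility $\utility(\factv{g_j}{1})=v_j$. All other facts get utility $0$, as Definition~\ref{defn:cost-bounded-plan-reduction} requires.
  \item The plan is $\pi=\tuple{a_1,\dots,a_k}$, the bound is $\costbound=W$, and the threshold is $M=V$.
\end{itemize}
The plan $\pi$ reaches $\G$, so it is a valid plan for $\Pi$. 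Because every action has an empty precondition, every subsequence of $\pi$ is executable. The subsequence that keeps the actions indexed by a set $S$ ends in a state whose utility is exactly $\sum_{j\in S} v_j$, and its cost is $\sum_{j\in S} w_j$.

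It follows that a subsequence with cost at most $\costbound$ and utility at least $M$ exists if and only if some item set $S$ satisfies $\sum_{j\in S} w_j\le W$ and $\sum_{j\in S} v_j\ge V$. Weights and values enter the instance in binary as costs and utilities, so the construction is polynomial in the size of the Knapsack instance.

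\textbf{Main obstacle.} The only delicate point is conforming to the definitions. The utilities must be nonzero only on goal facts, which holds here. The problem must also be NP-hard in the ordinary (weak) sense, which Knapsack provides with binary-encoded numbers. If one wanted hardness even when costs and utilities are unary, Knapsack would not suffice; a reduction from SAT using action preconditions to encode clause interactions would be needed. The statement does not ask for this, so the Knapsack reduction suffices.
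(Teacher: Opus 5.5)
Your proof is correct and is essentially the paper's: the paper uses the same gadget (one binary variable per item, a precondition-free action setting it true, all such facts as goals, and the plan listing these actions in order), but reduces from Subset Sum with $\cost(select_i)=\utility(v_i)=x_i$ and $\costbound=M=T$, which is the special case of your Knapsack reduction with equal weights and values. Your NP-membership argument matches the paper's as well, only stated in more detail.
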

\begin{proof}
  Checking whether a subsequence of a given plan $\pi$ is a plan for $\Pi_{osp}$ is trivially polynomial, so the problem is in NP.
  For NP-hardness, we reduce from the classical \emph{Subset Sum} problem.
  Let  $S = \{x_1,\dots,x_n\}$ be a set of integers and $T$ a target sum.
  We construct an instance of the cost-bounded plan-reduction problem as follows.
  The planning task has a binary variable $v_i$ for each integer in $S$, actions $select_i$ that set $v_i$ to true (i.e., include $x_i$ in the sum), an initial state where all $v_i$ are false, and a goal requiring all $v_i$ to be true.
  The initial plan is $\pi = \tuple{select_1,\dots, select_n}$, with cost function $\cost(select_i) = x_i$ and fact utility function $\utility(v_i) = x_i$.
  The cost bound $\costbound$ and minimal utility $M$ are both $T$.
  Any cost-bounded plan reduction of $\pi$ whose cost does not exceed $\costbound$ and whose utility is at least $M$ corresponds exactly to selecting a subset of integers whose sum is $T$.
\end{proof}

We propose two methods to solve cost-bounded plan reduction problems: an OSP compilation and an Integer Linear Programming (ILP) formulation. We further derive a reduced ILP formulation (\ILPplus) together with preprocessing techniques that significantly decrease the model size.

\subsection{Compilation to Oversubscription Planning}

Our approach to finding a cost-bounded plan reduction of $\pi$ is to perform a search in the space of $\pi$'s subsequences using the compilation proposed by \bracketcite{salerno-et-al-jair2025}, which enables that search by deciding, for each action in the plan, whether to apply it or skip it.
But instead of encoding the problem as a classical planning task, we encode it as an OSP task. This is a simple way of introducing the cost bound and the fact utilities into the search process.
Given a cost-bounded plan reduction problem $\cbpr= \tuple{\Pi, \pi, \costbound, \utility}$ with $\Pi = \tuple{\V, \A, \I, \G, \cost} $ and $\pi = \tuple{a_1, \dots, a_n}$, we make soft goals always relevant, redefining $R_\pi$ (originally the set of facts in a precondition of the actions in the plan or in the goal) as the set of facts that appear in a precondition of the actions in the plan $\pi$ or that have positive utility:
\[
  R_\pi = \bigcup_{a_i \in \pi} \pre(a_i) \cup \{\fact \in F \mid \utility(\fact) > 0 \}
\]
We generate the OSP task $\PiMR_{osp} = \tuple{\V', \A', \I', \cost', \utility', \costbound}$, where $\V'$, $\A'$, $\I'$ and $\cost'$ are the same as for the original encoding for $\PiMR$ but considering the new definition of $R_\pi$, $\utility'(\factv{v'}{d})=\utility(\factv{v}{d})$ for all $v' \in \V' \setminus \{pos\}$, and $\utility'(\factv{pos}{d})=0$. Figure \ref{fig:example-osp-compilation} illustrates the OSP compilation with a simple example.
\begin{figure}
  \centering
  \begin{adjustbox}{width=1\columnwidth, keepaspectratio}
    \includestandalone[mode=image]{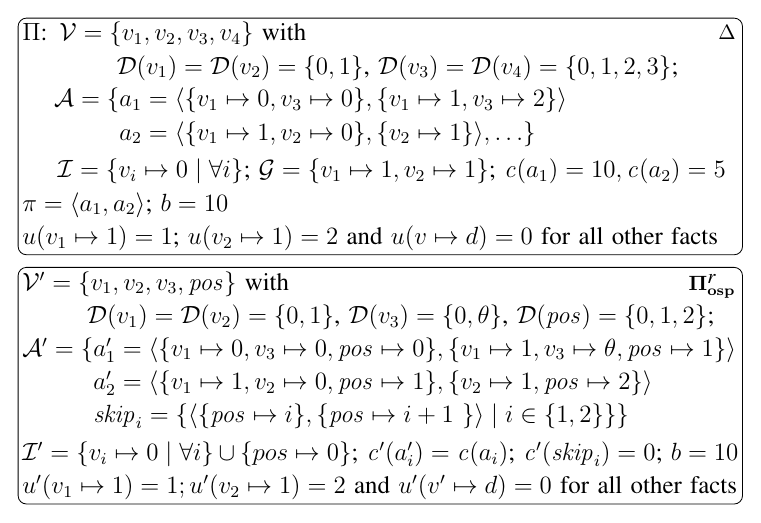}
  \end{adjustbox}
  \caption{Example of the OSP task $\PiMR_{osp}$ (bottom) for the cost-bounded plan reduction problem $\cbpr$ (top).}
  \label{fig:example-osp-compilation}
\end{figure}
\begin{theorem}
  Let $\cbpr$ be a cost-bounded plan reduction problem and let $\PiMR_{osp}$ be the corresponding OSP task. Then, the set of solutions for $\PiMR_{osp}$ is exactly the set of solutions for $\cbpr$, and a solution for  $\PiMR_{osp}$ is optimal iff it is optimal for $\cbpr$.
\end{theorem}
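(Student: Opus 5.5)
We set up an explicit bijection between plans of $\PiMR_{osp}$ and subsequences of $\pi$, and show that it preserves executability, cost and utility. Both claims of the theorem then follow at once.

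\textbf{Step 1: structure of compiled plans.} Every action of $\A'$ has a precondition $\factv{\pos}{i-1}$ and sets $\pos$ to $i$, and $\I'$ contains $\factv{\pos}{0}$. Hence any plan of $\PiMR_{osp}$ is a prefix of a sequence that, for $i=1,\dots,n$ in order, contains exactly one of $a_i$ or $\skipa_i$. We read off the subsequence $\rho$ of $\pi$ made of the applied actions. Conversely, any subsequence $\rho$ of $\pi$ is padded with skip actions into a unique full-length compiled sequence. A compiled plan that stops before position $n$ can be completed by appending skips. Skips cost zero and do not change the non-$\pos$ facts, so completion preserves cost and utility. We therefore restrict attention to full-length compiled plans; this is the sense in which the two solution sets are identified.

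\textbf{Step 2: invariant.} Let $s_k$ be the original state after executing $\rho$, and $s'_k$ the compiled state after the corresponding prefix. By induction on $k$ we show that for every $v\in\V'\setminus\{\pos\}$, $s'_k[v]=\tau(\factv{v}{s_k[v]})$, and that every variable dropped from $\V'$ never takes a relevant value that differs from what any precondition or soft goal requires.
\begin{itemize}
\item Base case: this holds by the construction of $\I'$.
\item Skip steps: both states are unchanged, so the invariant is preserved.
\item Apply steps: $\tau$ is the identity on $R_\pi$, and every precondition of $a_i$ lies in $R_\pi$. So $\pre(a_i)\subseteq s_k$ iff the compiled precondition (minus $\pos$) holds in $s'_k$. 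The compiled effect is $\tau$ of the original effect, which preserves the invariant.
\end{itemize}
Consequently $\rho$ is executable in $\Pi$ iff its compiled version is applicable in $\PiMR_{osp}$.

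\textbf{Step 3: cost and utility.} Skip actions cost $0$ and apply actions keep their original cost, so $\cost'=\cost(\rho)$ and the cost-bound constraint is identical in both problems. For utility, every fact with $\utility(\fact)>0$ is in $R_\pi$ by the redefinition, so $\tau$ fixes it. A fact mapped to $\theta$ has utility $0$, and $\pos$ facts have utility $0$. By the invariant, $\utility'(s'_n)=\utility(s_n)$.

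\textbf{Step 4: optimality.} The correspondence preserves both feasibility and objective value, so optimality transfers in both directions.

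The main obstacle is Step 2, specifically the variables removed from $\V'$. A variable with only one value in $F'_\pi$ must be argued to be constant with respect to relevant facts. If its single relevant value is $d\in R_\pi$, then either the variable always holds $d$, or it is never required. This is the delicate point, and we would first check it against the construction of $\I'$ and $\tau$ as taken from \bracketcite{salerno-et-al-jair2025}, including the case where soft goals now enlarge $R_\pi$.
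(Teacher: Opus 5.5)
Your proposal is correct and follows essentially the same route as the paper's proof: the apply/skip correspondence between subsequences of $\pi$ and compiled plans, preserving relevant preconditions, effects, cost and utility. You also write out what the paper leaves implicit, notably that $\PiMR_{osp}$ has no goal on $\pos$, so its plans may be prefixes and the paper's ``one-to-one'' correspondence holds only up to trailing skips. One small fix to Steps~2--3: a dropped variable $v$ whose single relevant value is some $d\neq\theta$ is constantly $d$ (since $\I[v]=d$ and every plan effect on $v$ sets $d$), and if $\factv{v}{d}$ is a soft goal it is absent from $\utility'$, so $\utility'(s'_n)$ differs from $\utility(s_n)$ by a plan-independent constant rather than being equal, which does not affect the theorem.
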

\begin{proof}
  By Definition~\ref{defn:cost-bounded-plan-reduction}, any solution of $\cbpr$ is a plan $\pi'$ for $\Pi_{osp}$.
  The compiled task $\PiMR_{osp}$ unfolds the original plan $\pi$ position by position and, at each step, allows either \emph{applying} or \emph{skipping} the corresponding action, preserving its relevant preconditions and effects; cost and utility functions coincide with those of $\Pi_{osp}$.
  $\pi'$ can be translated into a plan for $\PiMR_{osp}$ by selecting \emph{apply} actions exactly at the positions of the actions in $\pi'$ and \emph{skip} actions elsewhere.
  Conversely, any plan for $\PiMR_{osp}$ induces a subsequence of $\pi$ by removing all \emph{skip} steps; the compilation guarantees that this subsequence is a plan for $\Pi_{osp}$.
  Hence, there is a one-to-one correspondence between solutions of $\PiMR_{osp}$ and solutions of $\cbpr$, including the optimal ones.
\end{proof}

\subsection{Integer Linear Programming Formulation}
To solve the cost-bounded plan reduction problem $\cbpr=\tuple{\Pi, \pi, \costbound, \utility }$, we formulate the associated OSP task
$\PiMR_{osp} = \tuple{\V', \A', \I', \cost', \utility', \costbound}$, with facts $F'$,
as a 0-1 Integer Linear Program (ILP).
As in standard linear programming models defined for classical and partial satisfaction planning \cite{vossen-et-al-ijcai1999,vandenbriel-et-al-aaai2004,vandenbriel-kambhampati-jair2005}, we use binary variables to track the truth value of each fact at every layer of the horizon, binary action variables to encode action execution, and linear constraints that impose action preconditions, propagate action effects, and enforce fact persistence (frame axioms).
However, our ILP formulation focuses on \emph{plan reduction} rather than \emph{plan synthesis}.
Thus, the horizon is fixed to the length $n$ of the given plan and the action set consists solely of the \emph{apply} and the \emph{skip} actions for each plan position. Our objective is to maximize the utility of the final state induced by the resulting subplan while respecting the cost bound.

Our model is as follows:
There are two families of binary decision variables: (1) fact variables $x_{{\fact},i}$, defined for each fact $\fact \in F'$ and layer $i \in \{0,\dots,n\}$, and indicating whether $\fact$ holds in the state at layer $i$;
and (2) action variables $y_{a,i}$, defined for each action $a \in \A'_i$, where $\A'_i = \{a_i,skip_i\}$, $\A'_i \subseteq  \A'$, and for every plan position $i \in \{1,\dots,n\}$; indicating whether action $a$ is selected at layer $i$.
Since the representation of state variables is propositional, we explicitly encode the add and delete effects of actions.
For each action $a_i$, $\adds(a_i)=\eff(a_i)$, and $\del(a_i)=\{\factv{v}{d} \in \pre(a_i) \mid \factv{v}{d'} \in \eff(a_i),\, d' \neq d\} \cup \{\factv{v}{d} \in F' \mid v \notin \vars(\pre(a_i)), \factv{v}{d'} \in \eff(a_i),\, d' \neq d\}$.

\paragraph{Objective} Maximize the utility of the final state:
\[
  \max \sum_{\substack{\fact \in F' \\ \utility'(\fact)>0}} \utility'(\fact) \cdot x_{\fact,n}
\]
\paragraph{Subject to}
\begin{compactenum}
  \small
  \item Cost bound:   $\sum_{i=1}^{n}  \sum_{a \in \A'_i} \cost'(a_i) \cdot y_{a,i} \leq \costbound$
  \item Initial state:     $x_{\fact,0} =
    \begin{cases}
      1, & \text{if } \fact \in \I', \\
      0, & \text{otherwise}
    \end{cases}
    \quad \forall \fact \in F'$
  \item Exactly one action per layer:
  \[
    y_{a_i,i} + y_{\skipa_i,i} = 1 \quad \forall i \in \{1,\dots,n\}
  \]
  \item Precond. of selected actions must be true in the previous layer:
  \[
    \begin{aligned}
      y_{a,i} & \le x_{\fact,i-1}
              &                   & \forall a \in \A'_i,\ \forall \fact \in \pre(a),\ \forall i \in \{1,\dots,n\}
    \end{aligned}
  \]
  \item Del effects of selected actions must be false at the action layer:
  \[
    \begin{aligned}
      x_{\fact,i} & \le 1 - y_{a,i}
                  &                 & \forall a \in \A'_i,\ \forall \fact \in \del(a),\ \forall i \in \{1,\dots,n\}
    \end{aligned}
  \]
  \item Add effects of selected actions must be true at the action layer:
  \[
    \begin{aligned}
      x_{\fact,i} & \ge y_{a,i}
                  &             & \forall a \in \A'_i,\ \forall \fact \in \adds(a),\ \forall i \in \{1,\dots,n\}
    \end{aligned}
  \]

  \item Facts not modified by selected actions persist:
  \[
    \begin{aligned}
      x_{\fact,i} & \ge x_{\fact,i-1} - \sum_{\substack{a \in \A'_i                                                  \\ p \in \del(a)}} y_{a,i}
                  &                                                 & \forall p \in F',\ \forall i \in \{1,\dots,n\} \\
      x_{\fact,i} & \le x_{\fact,i-1} + \sum_{\substack{a \in \A'_i                                                  \\ p \in \adds(a)}} y_{a,i}
                  &                                                 & \forall p \in F',\ \forall i \in \{1,\dots,n\} \\
    \end{aligned}
  \]

\end{compactenum}

The ILP model represents $\PiMR_{osp}$, and therefore any feasible solution for it is a solution for $\cbpr$, and any optimal solution is an optimal solution for $\cbpr$.


\section{Refined ILP Formulation}
\label{sec:refined-model}

The former ILP model is a direct translation of the OSP model. An alternative and more compact ILP formulation can be derived by considering the following observations:
(1) skip action variables are fully determined by their corresponding apply action variables, and can be dropped without altering the set of optimal cost-bounded plan reductions;
(2) the $\pos$ variable that sequences the actions along the plan is unnecessary, since the ILP already sequences actions through the layer index;
(3) in a propositional representation, it is not necessary to represent irrelevant facts, i.e., those of the form $\factv{v}{\theta}$.
In the SAS$^+$ representation, these facts are needed to prevent a variable from retaining its previous value when an action changes it and the new value is never read. In contrast, the propositional representation already ensures that the fact corresponding to the previous value is deleted;
(4) when an action modifies a variable that is not in its preconditions, the propositional encoding for delete effects does not need to eliminate every possible value of that variable; it suffices to delete only those values that may have held beforehand; and
(5) there is no need to represent facts over all plan steps, but only at the \emph{change layers} induced by actions.
So instead of tracking full state evolution at every step, we only need to explicitly represent facts initially and at points where they are modified by an action.
Between those points, facts can be treated as persistent by omission.
We derive the reduced formulation from the components of $\PiMR_{osp} = \tuple{\V', \A', \I', \cost', \utility', \costbound}$. First, we obtain $\V''$ from $\V'$ by removing the variable $\pos$ and the irrelevant value ($\theta$) from the domains of the remaining variables.
Next, we obtain $\A''$ from $\A'$ by removing all skip actions and all preconditions and effects dealing with $\pos$ from apply actions. Likewise, $\I''$ is obtained from $\I'$ by removing all facts over $\pos$ and all facts $\factv{v}{\theta}$ that assign variables an irrelevant initial value. The functions $\cost''$ and $\utility''$ are redefined accordingly, while the cost bound $b$ is left unchanged.  We denote the resulting set of facts by $F''$.
For each $a_i \in \A''$, we encode add effects as before, $\adds(a_i)=\eff(a_i)$, but redefine delete effects as follows: $\del(a_i)=\{\factv{v}{d} \in \pre(a_i) \mid \factv{v}{d'} \in \eff(a_i),\, d' \neq d\} \cup \{\factv{v}{d} \in \I'' \cup \bigcup_{j<i} \eff(a_j) \mid v \notin \vars(\pre(a_i)), \factv{v}{d'} \in \eff(a_i),\, d' \neq d\}$.
That is, when an action modifies a variable that does not appear in its preconditions, the delete effects include only those facts corresponding to alternative values that could hold at that point.

For each fact $\fact \in F''$, we define its \emph{event sequence} $\events_\fact$ and its  \emph{precondition set} $\preset_\fact$. The event sequence $\events_\fact$ is the ordered list of plan positions at which an action changes the truth value of $\fact$ (that is, adds or deletes it), together with an initial event at layer $0$ that encodes its initial truth value. Formally, $\events_\fact = (\event_0, \event_1, \dots, \event_k)$, such that $\event_0 < \event_1 < \dots < \event_k$, $\event_0 = 0$, and $\{\event_1, \dots, \event_k\} = \{ i \in \{1,\dots,n\} \mid \fact \in \adds(a_i) \cup \del(a_i)\}$. The precondition set $\preset_\fact$ is the set of plan positions at which $\fact$ is a precondition of an action, that is, $\preset_\fact = \{ i \in \{1, \dots,n \} \mid \fact \in \pre(a_i)\}$.
Finally, for each fact $\fact \in F''$ and each position $i>0$ with $i \in \events_\fact \cup \preset_\fact \cup \{n+1\}$, we define $\lastevent_\fact(i)$ as the \emph{last event affecting $p$ prior to $i$}; formally,
$\lastevent_\fact(i) = \max\{\event_j \in \events_\fact \mid \event_j < i\}$.
The decision variables are: (i) fact variables $x_{\fact,i}$ for each fact $\fact \in F''$ and event $i \in \events_\fact$, which indicate whether $\fact$ holds in the state at layer $i$; and (ii) action variables $y_i$ for each plan position $i \in \{1,\dots,n\}$, which indicate whether the apply action $a_i \in \A''$ is selected. The objective and the constraints are defined as follows.

\paragraph{Objective} Maximize the utility of the final state, where the utility of a fact is determined by its last event in the plan:
\[
  \max \sum_{\substack{\fact \in F'' \\ \utility''(\fact)>0}} \utility''(\fact) \cdot x_{\fact,\lastevent_\fact(n+1)}
\]
\paragraph{Subject to}
\begin{compactenum}
  \small
  \item Cost bound:
  \[
    \sum_{i=1}^{n} \cost''(a_i) \cdot y_i \le \costbound
  \]
  \item Initial state:
  \[
    x_{\fact,0} =
    \begin{cases}
      1, & \text{if } \fact \in \I'', \\
      0, & \text{otherwise}
    \end{cases}
    \quad \forall \fact \in F''
  \]
  \item Each precondition of a selected action must be true at the layer of its last event before the action:
  \[
    y_i \le x_{\fact,\lastevent_\fact(i)} \quad \forall \fact \in \pre(a_i),\ \forall i \in \{1,\dots,n\}
  \]
  \item Del effects of selected actions must be false at the action layer:
  \[
    x_{\fact,i} \le 1 - y_i \quad \forall \fact \in \del(a_i),\ \forall i \in \{1,\dots,n\}
  \]
  \item Add effects of selected actions must be true at the action layer:
  \[
    x_{\fact,i} \ge y_i \quad \forall \fact \in \adds(a_i),\ \forall i \in \{1,\dots,n\}
  \]
  \item  Each effect of an unselected action persists from the layer of its last event before the action:

  \[
    \begin{aligned}
      x_{\fact,i} & \ge x_{\fact,\lastevent_\fact(i)} - \mathbb{I}_{\{p \in \del(a_i)\}} \cdot y_i  &  & \forall i \in \{1,\dots,n\} ,               \\
                  &                                                                                 &  & \forall \fact \in \del(a_i) \cup \adds(a_i) \\
      x_{\fact,i} & \le x_{\fact,\lastevent_\fact(i)} + \mathbb{I}_{\{p \in \adds(a_i)\}} \cdot y_i &  & \forall i \in \{1,\dots,n\} ,               \\
                  &                                                                                 &  & \forall \fact \in \del(a_i) \cup \adds(a_i) \\
    \end{aligned}
  \]

  where $\mathbb{I}_{\{p \in \del(a_i)\}}$ is an indicator function that is $1$ if $p \in \del(a_i)$ and $0$ otherwise, and $\mathbb{I}_{\{p \in \adds(a_i)\}}$ is defined analogously.

\end{compactenum}

\noindent Note that, $x_{\fact,i}$ is only defined for events $i \in \events_\fact$, and that it is guaranteed to be defined for constraints 4, 5 and 6 since they only apply to facts that are modified by the action at layer $i$ (i.e., $\fact \in \del(a_i) \cup \adds(a_i)$), and therefore $i \in \events_\fact$. And also that $x_{\fact,0}$ from constraint 2 belongs to $\events_\fact$ since it is the initial event for $\fact$.

\section{Experiments}
\label{sec:experiments}
To evaluate our approaches, we use the OSP versions of the 14 domains of the IPC~2011
created by \bracketcite{garciaolaya-et-al-aij2021}.
We generate initial plans for every instance with the Fast Downward planner\footnote{Available at \url{https://www.fast-downward.org}.}, using the \textit{seq-sat-fdss-2018} portfolio with a time limit of 30 minutes per instance and requesting a single plan.
For 14 problems, no plan was found within the time limit: 10 in {\floortile}, 1 in {\tidybot} and 3 in {\transport}.
These problems were discarded, since we require a valid initial plan as input.
We create problems with different degrees of oversubscription, with cost bounds of \(25\%\), \(50\%\), and \(75\%\) of their solution cost.
As in \bracketcite{garciaolaya-et-al-aij2021}, we consider two utility schemes: one where all goals have the same value (\emph{homogeneous}), and another where each goal is assigned an integer utility in \([1,10]\) (\emph{heterogeneous}).
All experiments were run on an Intel Xeon X3470 (2.93\,GHz) under Ubuntu~20.04.6 with 30\,GB of RAM and a wall-clock limit of 30 minutes per run. All benchmarks, code, and data are available online~\cite{deltoro-et-al-zenodo2026}.
For the OSP approach, we use \emph{Sym-Osp}~\cite{speck-katz-aaai2021}, a state-of-the-art optimal oversubscription symbolic planner.
For the ILP formulations, we use CPLEX~12.10\footnote{Available at the IBM CPLEX Optimization Studio website: \url{https://www.ibm.com/products/ilog-cplex-optimization-studio}.}, in its default multi-threaded configuration ($\text{mt}$) and also with a single thread ($\text{st}$).

We evaluate three approaches:
(i)~\OSP, using \emph{Sym-Osp};
(ii)~\ILP, the base ILP formulation; and
(iii)~\ILPplus, the refined ILP formulation presented in Section~\ref{sec:refined-model}.
All approaches start from the PDDL input and require its translation and compilation into $\PiMR_{osp}$; Figure~\ref{fig:encoding-total-time} includes this preprocessing step.

\begin{table*}[t]
  \centering
  \small
  \setlength{\tabcolsep}{3pt}
  \begin{tabular}{l rrrr rrrr rrrr}
    \toprule
    \multirow{2}{*}{Domain}
                     & \multicolumn{4}{c}{25\%}
                     & \multicolumn{4}{c}{50\%}
                     & \multicolumn{4}{c}{75\%}                                                                          \\
    \cmidrule(lr){2-5} \cmidrule(lr){6-9} \cmidrule(lr){10-13}
                     & \OSP                     & \ILPmt & \ILPst & \ILPplusst
                     & \OSP                     & \ILPmt & \ILPst & \ILPplusst
                     & \OSP                     & \ILPmt & \ILPst & \ILPplusst                                           \\
    \midrule
    barman (20)      & 17                       & 20     & 20     & 20           & 14 & 20 & 20 & 20 & 13 & 20 & 20 & 20 \\
    elevators (20)   & 4                        & 20     & 20     & 20           & 2  & 20 & 20 & 20 & 2  & 20 & 20 & 20 \\
    floortile (10)   & 10                       & 10     & 10     & 10           & 10 & 10 & 10 & 10 & 10 & 10 & 10 & 10 \\
    nomystery (20)   & 20                       & 20     & 20     & 20           & 20 & 20 & 20 & 20 & 20 & 20 & 20 & 20 \\
    openstacks (20)  & 2                        & 15     & 20     & 20           & 0  & 15 & 20 & 20 & 0  & 15 & 20 & 20 \\
    parcprinter (20) & 11                       & 20     & 20     & 20           & 8  & 20 & 20 & 20 & 8  & 20 & 20 & 20 \\
    parking (20)     & 20                       & 20     & 20     & 20           & 20 & 20 & 20 & 20 & 20 & 20 & 20 & 20 \\
    pegsol (20)      & 20                       & 20     & 20     & 20           & 20 & 20 & 20 & 20 & 20 & 20 & 20 & 20 \\
    scanalyzer (20)  & 20                       & 20     & 20     & 20           & 20 & 20 & 20 & 20 & 20 & 20 & 20 & 20 \\
    sokoban (20)     & 20                       & 20     & 20     & 20           & 20 & 20 & 20 & 20 & 20 & 20 & 20 & 20 \\
    tidybot (19)     & 18                       & 19     & 19     & 19           & 18 & 19 & 19 & 19 & 18 & 19 & 19 & 19 \\
    transport (17)   & 1                        & 17     & 17     & 17           & 1  & 17 & 17 & 17 & 0  & 17 & 17 & 17 \\
    visitall (20)    & 2                        & 5      & 9      & 20           & 1  & 5  & 9  & 20 & 1  & 5  & 9  & 20 \\
    woodworking (20) & 4                        & 20     & 20     & 20           & 2  & 20 & 20 & 20 & 2  & 20 & 20 & 20 \\
    \midrule
    Total (\textbf{266})
                     & 169                      & 246    & 255    & \textbf{266}
                     & 156                      & 246    & 255    & \textbf{266}
                     & 154                      & 246    & 255    & \textbf{266}                                         \\
    \bottomrule
  \end{tabular}
  \caption{Coverage per domain, cost bound percentage, and approach for heterogeneous utilities. Each group of four columns corresponds to a cost bound (25\%, 50\%, 75\%). Within each group, columns report the number of instances solved by \OSP, \ILPmt (multi-threaded base ILP), \ILPst (single-threaded base ILP), and \ILPplusst (single-threaded reduced ILP), respectively. The number in parentheses after each domain name indicates the total number of instances.}
  \label{tab:coverage}
\end{table*}
Table~\ref{tab:coverage} reports the coverage results for heterogeneous utilities.
Comparing \ILPmt (multi-threaded base ILP) and \ILPst (its single-threaded counterpart) reveals that restricting CPLEX to a single thread recovers 9 instances: all 5 in {\openstacks} and 4 of the 11 missing in {\visitall}. In all cases these failures are timeouts: multi-threaded CPLEX introduces synchronization overhead and explores the branch-and-bound tree differently, which on these large instances results in less coverage than single-threaded search. For this reason we only include \ILPplusst (single-threaded reduced ILP) for the reduced ILP formulation.

All ILP configurations outperform \OSP across all budget levels, but under the most restrictive budget (25\%) the gap is less pronounced, indicating that \OSP behaves comparatively better under severe budget reductions.
For the homogeneous case, similar patterns hold: \OSP solves 177, 161, and 158 instances at 25\%, 50\%, and 75\% respectively, while \ILPst solves 255 and \ILPplusst solves all 266 instances.
Each ILP configuration solves the same number of instances regardless of the budget level, whereas \OSP exhibits significant variation (169 vs.\ 154 from 25\% to 75\%).
In ILP, the model structure (variables and constraints) is fixed for a given instance, and changing the budget only modifies the cost bound constraint.
By contrast, in OSP, the budget directly affects the set of possible plans: higher bounds allow longer subplans and many more feasible combinations, which can make search harder.
Similar effects have been reported before: ILP model size scales with plan length~\cite{kautz-walser-ker2000,vandenbriel-kambhampati-jair2005}, and symbolic OSP can require substantially more state expansions as the cost bound increases~\cite{speck-katz-aaai2021}.

\OSP never solves instances with plans much longer than 450 actions, whereas \ILPst handles plans up to about 700 actions and \ILPplusst solves all instances, including the 11 {\visitall} instances on which \ILPst times out.
These instances have plans of over 1000 actions (up to 3537).

Beyond plan length, key factors limiting \OSP coverage are goal interaction and the number of goals.
Domains such as {\elevators}, {\transport}, {\parcprinter}, and {\woodworking} exhibit tightly coupled subgoals. {\openstacks} combines strong goal interactions with a large number of goals, while {\visitall} is challenging primarily due to its high goal count.

  \begin{figure*}[t!]
    \centering
    \begin{subfigure}[b]{0.27\textwidth}
      \centering
      \includegraphics[width=\linewidth]{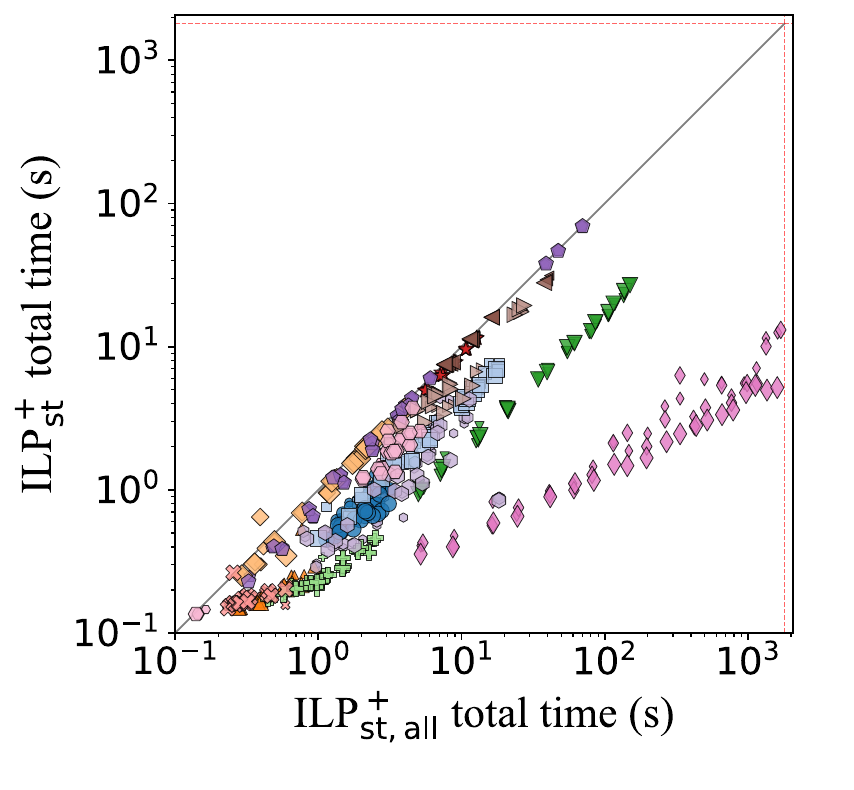}
      \caption{Total time.}
      \label{fig:encoding-total-time}
    \end{subfigure}\hspace{0.012\textwidth}%
    \begin{subfigure}[b]{0.27\textwidth}
      \centering
      \includegraphics[width=\linewidth]{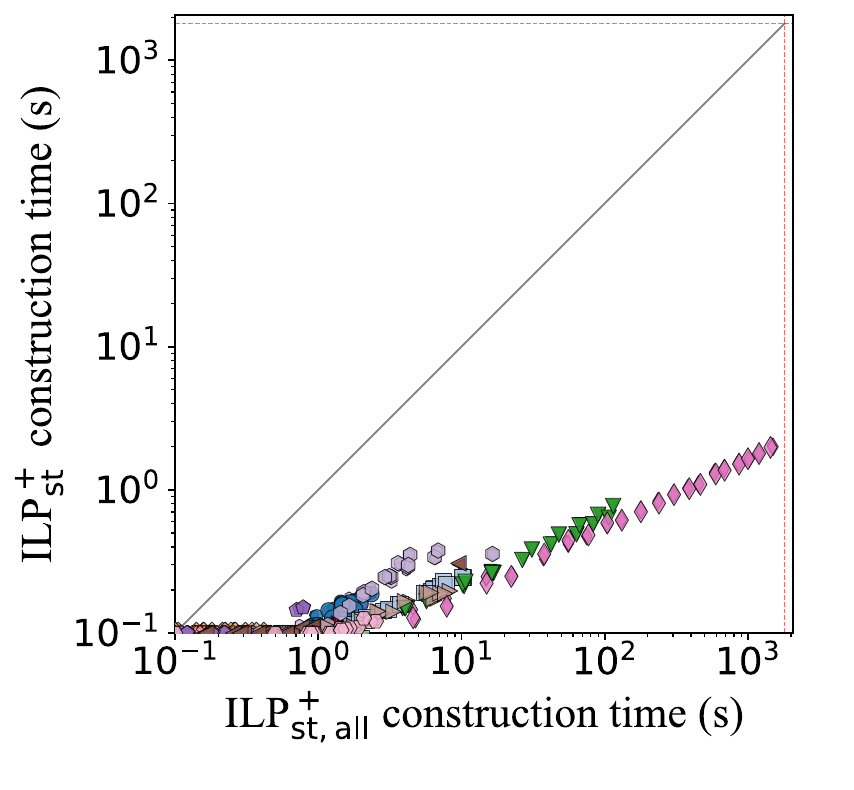}
      \caption{Construction time.}
      \label{fig:encoding-construction}
    \end{subfigure}\hspace{0.012\textwidth}%
    \begin{subfigure}[b]{0.27\textwidth}
      \centering
      \includegraphics[width=\linewidth]{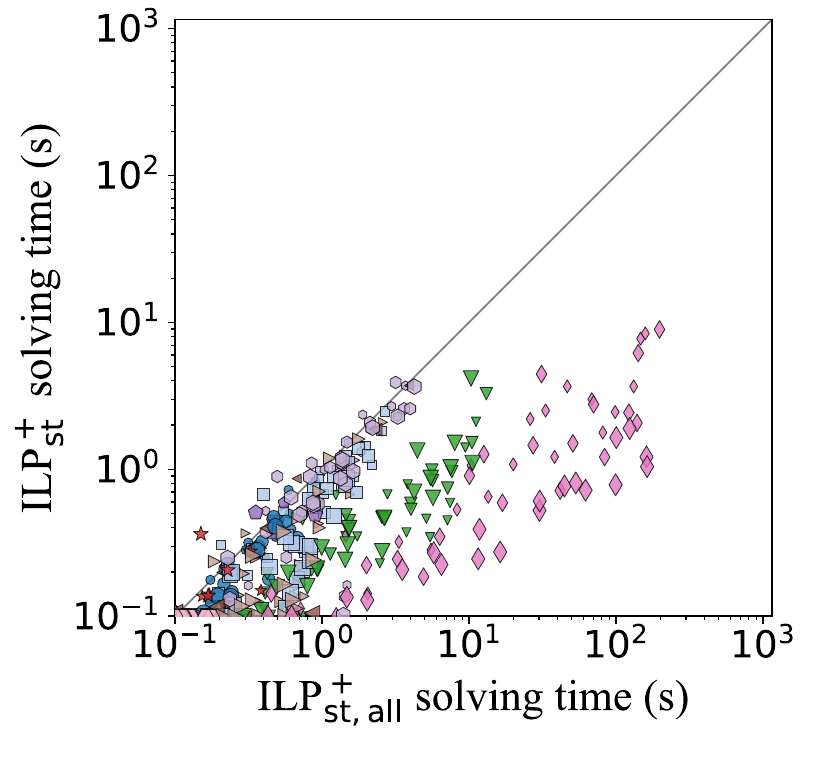}
      \caption{Solving time.}
      \label{fig:encoding-solving}
    \end{subfigure}\hspace{0.012\textwidth}%
    \newsavebox{\scatterbox}%
    \savebox{\scatterbox}{\includegraphics[width=0.27\textwidth]{figures/scatter_encoding_total_time_by_domain_util10_paper.pdf}}%
    \begin{subfigure}[b]{0.14\textwidth}
      \centering
      \includegraphics[height=\ht\scatterbox]{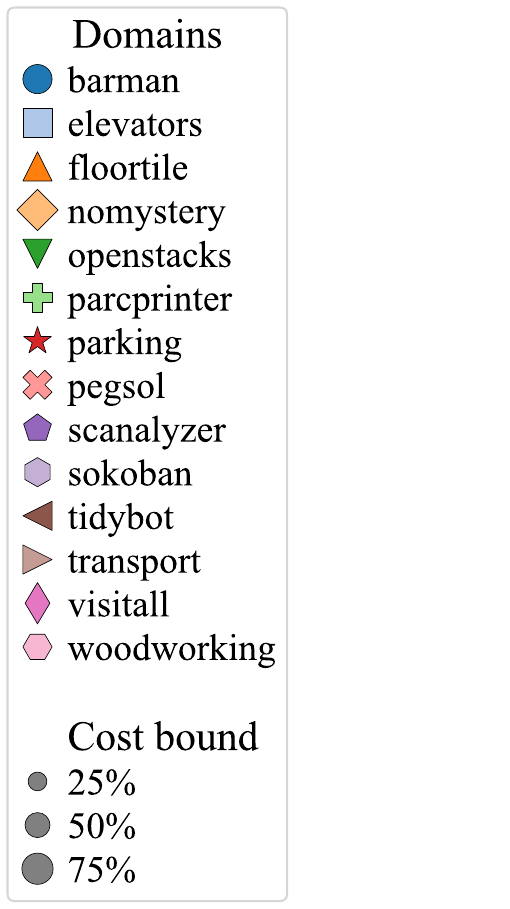}
      \caption*{}
    \end{subfigure} 

  \caption{Scatter plots comparing \ILPplusstall (fact variables at every layer) vs.\ \ILPplusst (fact variables only at change layers) for (a)~total time, (b)~construction time, and (c)~solving time, over all domains and cost bounds with heterogeneous utilities. Both variants use single-threaded CPLEX. Points below the diagonal indicate faster runtimes with event sequences at change layers.}
  \label{fig:encoding-comparison}
\end{figure*}

As introduced in Section \ref{sec:refined-model}, the reduced model ($\ILPplus$) differs from the base model ($\ILP$) in several ways. Its implementation also includes code-level optimizations for model construction. Leaving aside these implementation optimizations, the main source of the speedup of $\ILPplus$ over $\ILP$ is that it represents facts only at their change layers (event sequences), rather than at every plan step, which substantially reduces the number of variables and constraints in the ILP model. Other changes, such as dropping skip actions, have a smaller impact on the overall speedup.
To isolate the effect of considering event sequences, we compare $\ILPplusstall$, which uses the refined formulation but introduces all plan steps in the event sequence of each fact, creating fact variables at every plan layer, against $\ILPplusst$.
They differ only in whether the event sequences contain all plan positions or only the change layers.
Using event sequences at change layers reduces the number of variables by 86--99.5\% and the number of constraints by 83--99.5\% across domains.
Figure~\ref{fig:encoding-comparison} compares total runtime, model-construction time, and CPLEX solving time, for all cost bound percentages and heretogeneous utilities. Across the 798 instances solved by both encodings, the overall median ILP speedup is 6.3x, with per-domain medians ranging from 2.8x ({\scanalyzer}) to 116.1x ({\visitall}). The speedup in model-construction time is even larger: 10.5x overall, with per-domain medians from 3.6x to 268.7x. And the solving-time speedup is 2.2x overall, ranging from 1.4x to 21.0x. This shows that using the event sequences substantially reduces the model-construction time and also the solving time, particularly for larger instances.

\section{Conclusion}
\label{sec:conclusion}
We have formalized and optimally solved the cost-bounded plan-reduction problem by introducing two methods: an OSP compilation and an ILP formulation.
We further propose a refined ILP formulation  that significantly reduces the model size.
Experiments show that the ILP formulations outperform the OSP one across all budget levels, and that the refined ILP model introduced in this paper is much faster than the base ILP model.

As future work, we plan to further improve the ILP formulation by exploring additional model reductions, 
to integrate the method with satisficing OSP approaches based on heuristic goal selection to bring plans that exceed the cost bound back within it~\cite{garciaolaya-et-al-aij2021}, and to incorporate explainability capabilities.

\section*{Acknowledgements}

This work was partially supported by MICIU/AEI/10.13039\linebreak/501100011033, by ``ERDF A way of making Europe'', and by ``ESF+'', under grants PID2021-127647NB-C21 and PRE2022-104392.

\bibliography{abbrv, literatur, crossref, extra}

\end{document}